\newtheorem{definition}{\bf Definition}
\newtheorem{property}{\bf Property}
\newtheorem{lemma}{\bf Lemma}
\newcommand{\gobble}[1]{}
\newcommand{\dbloverline}[1]{\overline{\dbl@overline{#1}}}
\newcommand{\dbl@overline}[1]{\mathpalette\dbl@@overline{#1}}
\newcommand{\dbl@@overline}[2]{%
  \begingroup
  \sbox\z@{$\m@th#1\overline{#2}$}%
  \ht\z@=\dimexpr\ht\z@-2\dbl@adjust{#1}\relax
  \box\z@
  \ifx#1\scriptstyle\kern-\scriptspace\else
  \ifx#1\scriptscriptstyle\kern-\scriptspace\fi\fi
  \endgroup
}
\newcommand{\dbl@adjust}[1]{%
  \fontdimen8
  \ifx#1\displaystyle\textfont\else
  \ifx#1\textstyle\textfont\else
  \ifx#1\scriptstyle\scriptfont\else
  \scriptscriptfont\fi\fi\fi 3
}
\let\NAT@parse\undefined 
\title{\LARGE \bf 
A Model for Optimal Resilient Planning Subject to Fallible Actuators
}
\author{Kyle Baldes, Diptanil Chaudhuri, Jason M. O'Kane, and Dylan A. Shell
\thanks{\hspace*{-2.2ex}\scriptsize The authors are all affiliated with Dept. of Computer Science \& Engineering, 
Texas A\&M University, College Station, TX, USA.
        {\tt\scriptsize \{baldesk$\,|\,$dshell\}@tamu.edu}.
        }
}
\protected\def\verythinspace{%
  \ifmmode
    \mskip0.5\thinmuskip
  \else
    \ifhmode
      \kern0.08334em
    \fi
  \fi
}
\providecommand{\Pr}{\ensuremath{\mathrm{Pr}}}
\providecommand{\Reals}{\ensuremath{\mathbb{R}}}
\providecommand{\Nats}{\ensuremath{\mathbb{N}_{>0}}}
\providecommand{\opt}{\ensuremath{\star}}
\providecommand{\mdp}{MDP\xspace}
\providecommand{\mdps}{MDPs\xspace}
\providecommand{\famdp}{FA-MDP\xspace}
\providecommand{\famdps}{FA-MDPs\xspace}
\providecommand{\monomdp}{monolithic \mdp}
\providecommand{\fset}[1]{\ensuremath{[{#1}]}}
\providecommand{\pow}[1]{\ensuremath{{2}^{#1}}\xspace}
\providecommand{\coll}[1]{\ensuremath{\boldsymbol{#1}}\xspace}
\providecommand{\mon}{\ensuremath{_{\rm mon}}}
\providecommand{\all}[1]{\ensuremath{{#1}^{\!\verythinspace\flat}}}
\providecommand{\babybell}{\ensuremath{{\rm\bf b}}}
\providecommand{\typemdp}[1]{\scalebox{0.85}{\ensuremath{\mathcal{#1}}}}
\providecommand{\node}[2]{\ensuremath{\langle{#1};{#2}\rangle}}
\providecommand{\typegr}[1]{\ensuremath{\mathcal{#1}}}
\providecommand{\idx}[1]{{\scalebox{0.8}{\ensuremath{({#1})}}}}
\providecommand{\maxnorm}[1]{\ensuremath{\left\|{#1}\right\|}}
\providecommand{\smallmaxnorm}[1]{\ensuremath{\big\|{#1}\big\|}}
\providecommand{\abs}[1]{\ensuremath{\left|{#1}\right|}}
\providecommand{\lb}{\ensuremath{{(\gamma\cdot\relM)}}}
\providecommand{\tgt}{\ensuremath{{\rm desired}}}
\providecommand{\goal}{\ensuremath{{\rm goal}}}
\providecommand{\rel}{\ensuremath{\rho}}
\providecommand{\relm}[1]{\ensuremath{\hat{\rel}_{#1}}}
\providecommand{\relmin}[1]{\ensuremath{\check{\rel}_{#1}}}
\providecommand{\relM}{\ensuremath{\overline{\rel}}}
\providecommand{\panglossian}{panglossian\xspace}
\providecommand{\gridworld}{gridworld\xspace}
\providecommand{\hotstarting}{hot-starting\xspace}
\providecommand{\monoplanner}{Monolithic Planner\xspace}
\providecommand{\latticeplanner}{Lattice Planner\xspace}
\providecommand{\latticeplannerhotstart}{Hot-Start Lattice Planner\xspace}
\providecommand{\reconfigurable}{reconfigurable\xspace}
\providecommand{\Faulttolerance}{Fault tolerance\xspace}
\providecommand{\faultdetection}{fault detection\xspace}
\providecommand{\locA}{\textcolor{red!50!orange}{\textsf{A}}\xspace}
\providecommand{\locB}{\textcolor{red!50!orange}{\textsf{B}}\xspace}
\providecommand{\locC}{\textcolor{red!50!orange}{\textsf{C}}\xspace}
\providecommand{\locD}{\textcolor{red!50!orange}{\textsf{D}}\xspace}
\let\emptyset\varnothing 
\begin{document}
\maketitle

\thispagestyle{plain}
\pagestyle{plain}


\begin{abstract}
Robots incurring component failures ought to adapt their behavior to best realize still-attainable goals under reduced capacity.
We formulate the problem of planning with actuators known \emph{a priori} to be susceptible to failure within the Markov Decision Processes (MDP) framework.
The model captures utilization-driven malfunction and state-action dependent likelihoods of actuator failure in order to enable reasoning about potential impairment and the long-term implications of impoverished future control.
This leads to behavior differing qualitatively from plans which ignore failure.
As actuators malfunction, there are combinatorially many configurations which can arise.
We identify opportunities to save computation through re-use, exploiting the observation that differing configurations yield closely related problems.
Our results show how strategic solutions are obtained so robots can respond when failures do occur---for instance, in prudently scheduling utilization in order to keep critical actuators in reserve.
\end{abstract}

\section{Introduction: Motivation and related work}
Robot hardware malfunctions and faults aren't rare events.
One means of mitigation to engineer rugged components;
another is redundancy, as in \reconfigurable modular robots~\cite{ahmadzadeh2015modular}.
But over-provisioning, the idea underlying both approaches, drives up weight, energy, and economic costs.  
An alternative is to imbue systems with the ability to anticipate and tolerate their own deterioration. 
The present paper adopts this latter philosophy. 
It focuses on the specific problem of actuator failure---a known issue already identified as important in prior work (e.g., see \cite{lu2023joint,xiao2020robust,yoo2012actuator,hsiao2012hierarchical}).
We consider a robot that plans and then executes a policy, reasoning not only about traditional uncertainty in the form of imprecise control (i.e., stochasticity in action outcomes), but also the possibility that each use of an actuator may cause it to fail. 

We treat this problem through the Markov Decision Processes (\mdp) framework\,\cite{lavalle06planning, russell09artificial, sutton18reinforcement, bertsekas19reinforcement}.
It is standard practice within robotics to solve \mdps under the expected total discounted reward criterion and,
although the discount factor is often explained as quantifying the preference for immediate versus future rewards, Puterman points out:
\begin{quoting}
\small\par\textsl{
``Discounting may be viewed in another way. The decision maker values policies
according to the expected total reward criterion; however, the horizon length\dots 
is random and independent of the actions of the decision maker. Such randomness in
the horizon length might be due to \dots death\dots'' \cite[\S 5.3, .p.~125]{puterman94mdp}}
\end{quoting}
Under this interpretation current approaches treat robot systems as failing in a catastrophic (or all-or-nothing) way.
The approach we present, thus, can be considered a generalization of the standard model where, rather than a failure debilitating the entire device, it retains some abilities to operate albeit in a diminished fashion.

A direct way to treat failure of an actuator, once the system has determined that the associated actions are no longer feasible, is to re-plan using those capabilities which remain.
This approach is \emph{\panglossian}: despite failures having occurred, the robot plans as if there will be no future failures. 

Fundamentally, re-planning adopts a reactive point of view\,---\,it lacks foresight. 
Instead, one might reason strategically about when and where to employ actions,
taking into account the implications for the future, considering
the structure of the state-space and, critically, the
\emph{reliability} of the available actuators.  Because this involves reasoning
about (probabilistically weighted) impaired future operation,
generating such behavior is anticipatory rather than reactive.

\begin{figure}
\hspace*{10pt}
\begin{minipage}{1.0\linewidth}
\centering
    \includegraphics[width=1.8cm, trim=-9pt 130pt 0pt 0pt, clip]{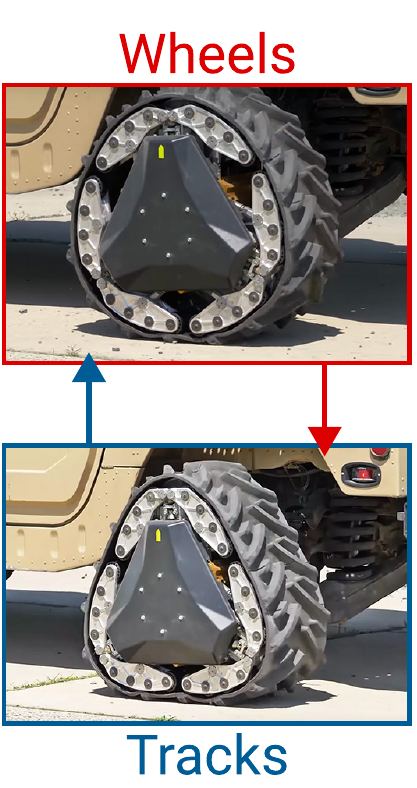}\hspace*{-4pt}
    \includegraphics[width=6.4cm, trim=25pt 190pt 10pt 40pt]{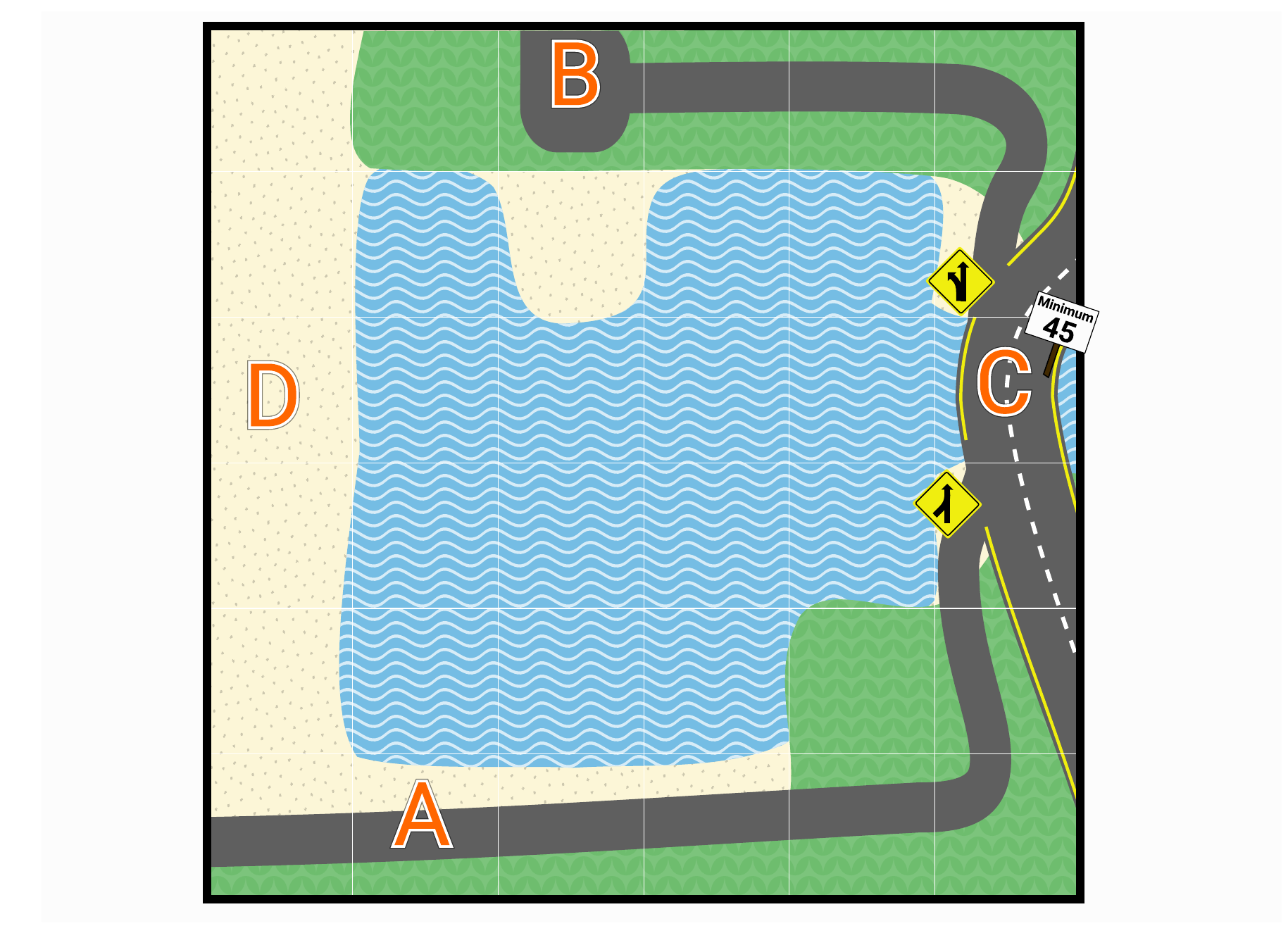}\vspace*{5pt}
    \hspace*{-0.9cm}\setlength{\fboxsep}{0pt}\fbox{\includegraphics[height=1.80cm, trim=-1pt 0pt 0pt 384.5pt, clip]{figures/humvee4.pdf}}\hspace*{115pt}\phantom{x}
    \vspace*{-1pt}
\end{minipage}
\caption{A robot equipped with \emph{wheels} and \emph{tracks} travels from location \locA to \locB subject to both motion uncertainty and the possibility of actuator failure.\label{fig:motivation}}
\vspace*{-12pt}
\end{figure}

\newcommand{\policyheight}{4.55cm}
\begin{figure*}[t!]
    \centering
    \begin{minipage}{\linewidth}
    \centering
    \hfill
        \subfigure[A panglossian policy.]{
        \includegraphics[height=\policyheight]{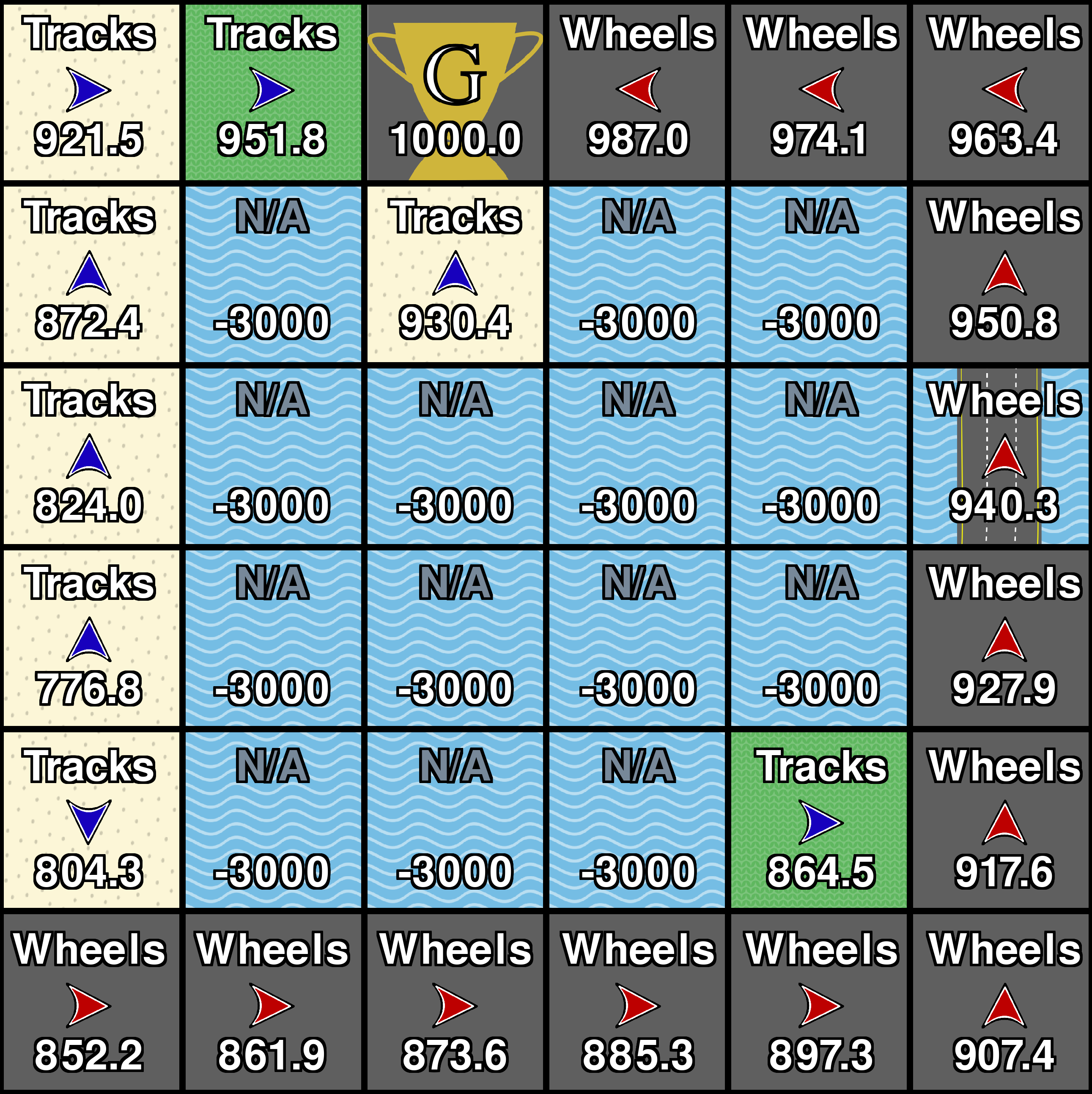}
        \label{fig:panglossianpolicy}
    }
    \hfill
    \subfigure[Failure aware policy.]{
        \includegraphics[height=\policyheight]{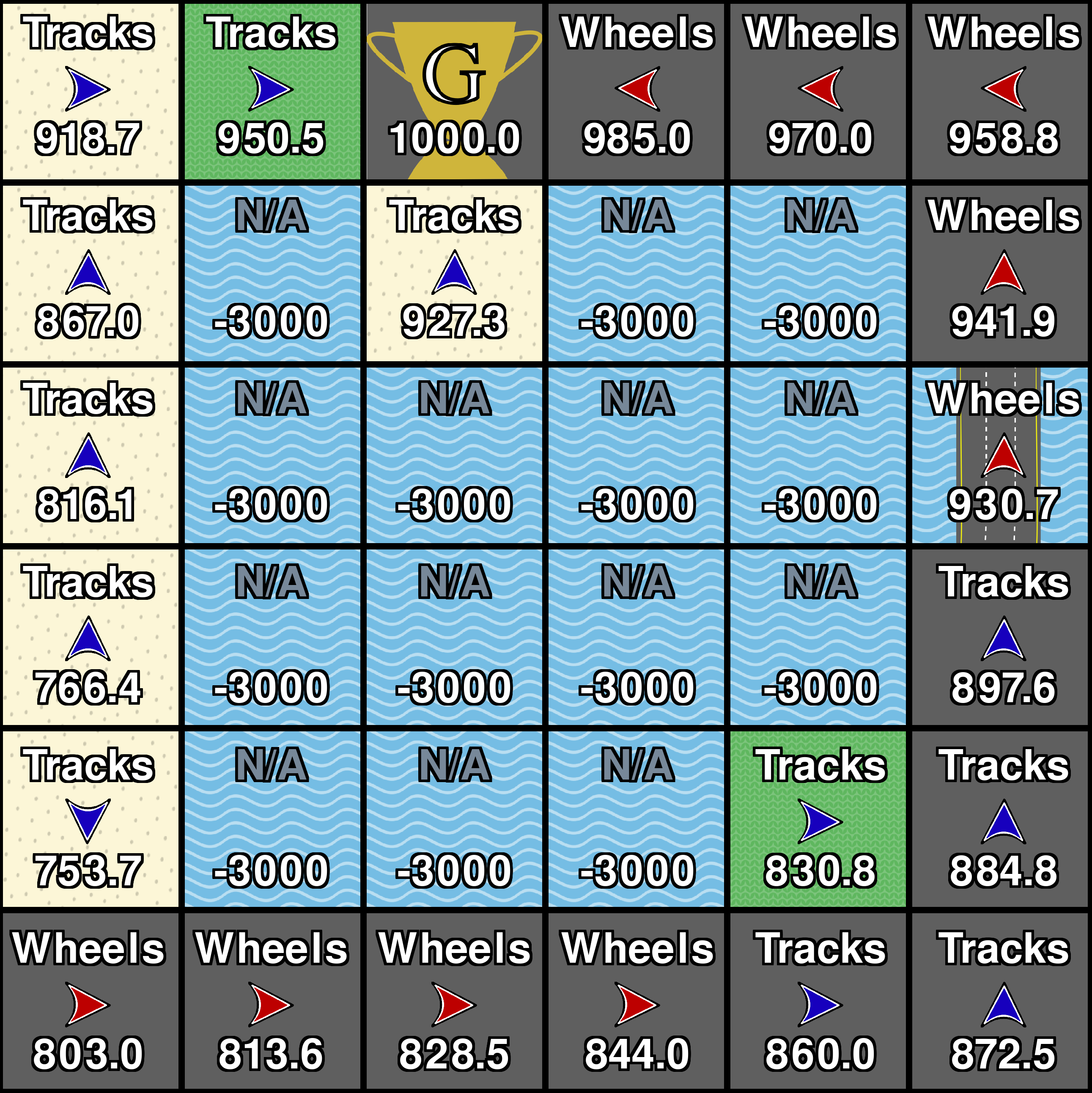}
        \label{fig:famdpPlannerTop}
    }
    \hfill
    \subfigure[Panglossian policy execution.]{
        \includegraphics[height=\policyheight, trim=80pt 15pt 70pt 15pt, clip]{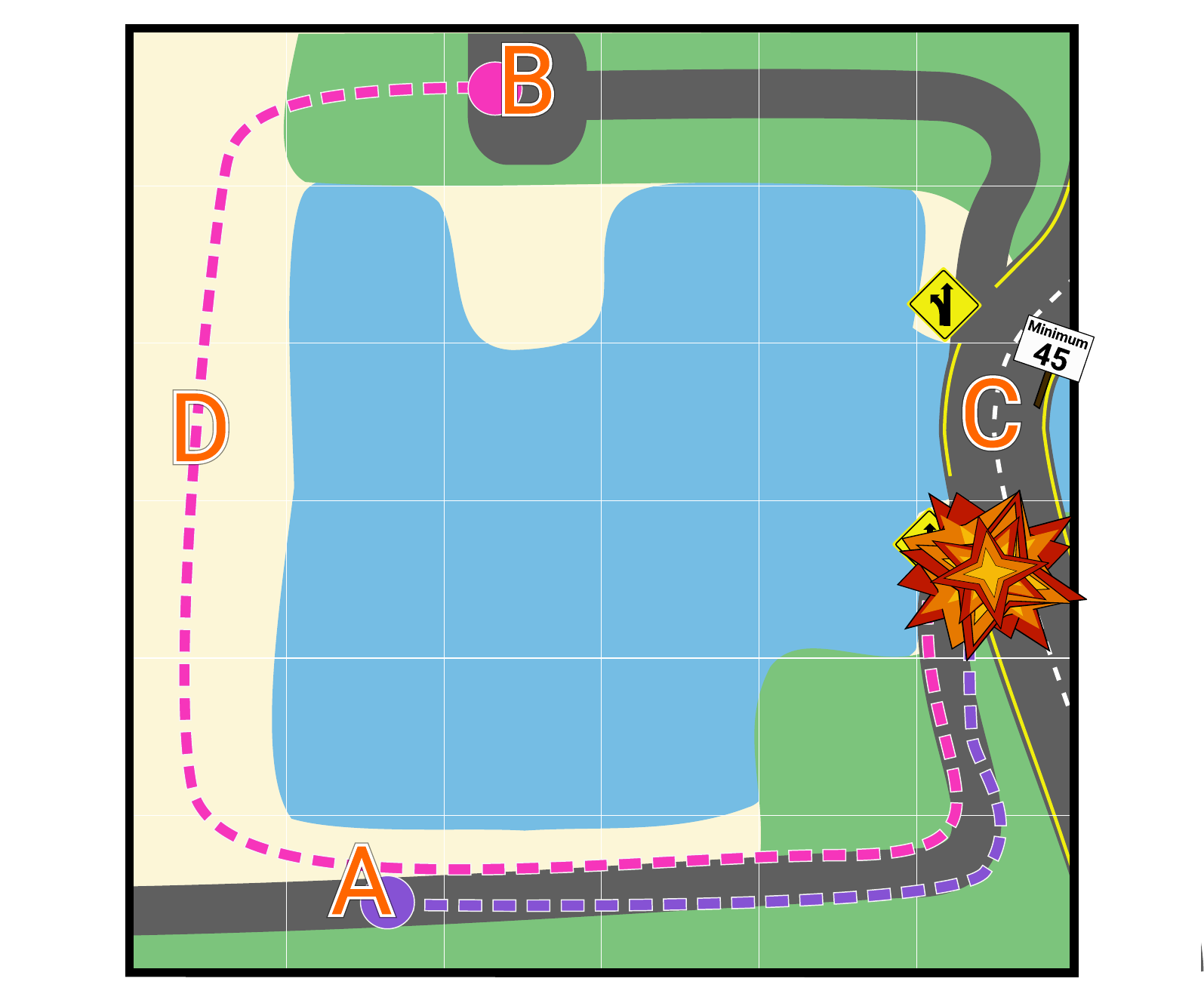}
        \label{fig:panglossianfailure}
    }
    \hfill\phantom{.}
    \vspace*{-6pt}
    \end{minipage}
    \begin{minipage}{1.0\textwidth}
        \caption{Computed policies for the situation in \autoref{fig:motivation} for specific reward values.
        \textsl{(a)} Policy derived from a $6\times6$ \gridworld representation of \autoref{fig:motivation} not accounting for actuator failures. 
        \textsl{(b)}  Failure aware policy.
        \textsl{(c)} Execution of the \panglossian policy in (a) that, after failure, results in undesired behavior.
        At each grid cell, the control with the maximum expected future reward is displayed by indicating the actuator, direction of travel, and the cost-to-go for that control. At the goal state, denoted by G, all controls have the same expected future reward.}
    \end{minipage}
    \label{fig:policycomparison}
    \vspace*{-10pt}
\end{figure*}
\subsection{Motivating scenario}

Consider the scenario in \autoref{fig:motivation}: a robot must navigate from~\locA to \locB. 
It has wheels and tracks, through a \reconfigurable wheel-track system like that of~\cite{lyness2019rwt} that allows the robot to rapidly transition between them. 
Only one actuator may be used at a time so the robot must decide when to use each.  
The tracks and wheels have complementary capabilities: 
wheels enable the robot to move faster on the road at the price of reliability; 
tracks, in turn, trade speed on the road for reliability at moderate speed anywhere. 
The tracks' slower speed makes them unusable along sections of road with a minimum speed, such as the one on the bridge designated point~\locC,
where a threshold is  imposed to alleviate highway congestion.
This intricate actuation arrangement is not infallible, as there is a nonzero probability of failure for both wheels and tracks. 
The rate of failure is depends on usage, and the occurrence of malfunction is detected when employing the actuator. 
The system may recover from a single failure by switching when a failure is detected, however a subsequent failure will leave the robot immobile.

To travel from \locA to \locB, the robot constructs a policy informing it about the controls it will use at each location.
Since the robot is unable to operate in the water, there are two high-level classes of path along which it can 
travel to get from \locA to \locB. 
The first, called route~1, takes the robot along the access road, and briefly along the highway to cross the bridge at point~\locC, before taking the top access road to \locB.
The second route requires the robot to travel off-road through the sand at point~\locD, and then across the grass before reaching point~\locB. 
A robot which disregards the potential for actuator failure produces a policy (which we designate the \panglossian policy) seen in \autoref{fig:panglossianpolicy}. 
This policy has the robot traveling along route~1, utilizing the wheels along the entire journey as their higher speed garners a higher expected reward than using the tracks.

Now consider a policy which does account for actuator failure, depicted in \autoref{fig:famdpPlannerTop}.
This policy also chooses to travel along route~1, but instead elects to use both actuators along the way.
Most significantly, the wheels are held in reserve before reaching the critical point \locC, where they are required to cross the bridge along the highway.
Once close enough to the goal, the policy favors the wheels for their speed along the access road---as even if they fail, the robot can then still succeed by falling back to the tracks.

The implications of the policy difference is illustrated in \autoref{fig:panglossianfailure}. 
Here a myopic robot executing the \panglossian policy travels along route~1 using the wheels (highlighted in \textcolor{violet}{violet}).
However, the robot is alerted to the failure of the wheels as it attempts to use them just before point~\locC.
Now, unable to cross the bridge, the robot must turn-around and travel back to location~\locA before continuing on route 2 to location~\locB (in \textcolor{magenta}{magenta}). 
A robot with policy which anticipates failures will not encounter such a scenario as the wheels are preserved, ensuring their availability at the critical juncture.
Moreover, the difference in outcomes can be arbitrarily bad,
for example, when a robot crosses a bridge onto an island 
and,  owing to a failure, the stricken robot is now stranded.

This trivial case exemplifies the necessity to equip robots with information that characterizes how their capabilities might change with use and time. 
Metrics such as component failure rates are already widely adopted, suggesting that a reliability metric based on utilization is reasonable.

\subsection{Related work}
The distinction between an over-provisioned system and one
which adjusts its performance in light of operating challenges has been
proposed as the difference between resilient and
robust behavior~\cite{prorok2021beyond, zhang2017resilient}, both having
seen much recent attention.

\Faulttolerance and \faultdetection have a long history in
robotics~\cite{srinivas77thesis,donald88geometric}; the early
work is summarized in the survey
of~\cite{visinsky94robotic}, and was revisited more recently in~\cite{walker2022robot}. 
Work in this vein generally involves introspection or a process of scanning in order to detect faults and diagnosis (i.e. attributing and tracking faultiness~\cite{verma2004real}, or determining the bounds of one's own performance~\cite{gautam22}).  
The literature on diagnosis and \faultdetection is considerable, 
for a survey specifically with a focus on robots see \cite{khalastchi2018fault}.
Some research also emphasizes specific recovery mechanisms, e.g.,~\cite{crestani15enhancing}.

The control theory community has devoted considerable attention to the topic of
fault tolerant control, with wide application in
safety critical systems (e.g., aircraft and nuclear facilities~\cite{kumar2018reliability}).
The recent survey of~\cite{amin2019}
evaluates different methodologies for detecting,
isolating and handling faults in both sensors and actuators.
Those methods handle low-level \faultdetection and tolerance,
but neglect to incorporate actuator reliability into strategic decisions 
that can affect how high-level goals are attained, e.g., picking
vastly different classes of solution.

For multi-robot systems, 
recent work on
pursuit-evasion computes plans robust to a subset of robots 
perishing~\cite{olsen22robust}. 
Also, work has examined robot reliability 
in teams of unmanned aerial vehicles (UAVs)~\cite{ure2013uav}, and
during multi-robot task
allocation~\cite{stancliff2009taskallocation}.
The latter provides the aphorism ``in order to successfully plan we must `plan
to fail'\thinspace''---\,a statement which succinctly epitomize the methodology
behind our work wherein we 
incorporate knowledge of component reliability of a single robot into decision making.

\section{Notation, preliminaries, and assumptions}

\subsection{Notation}
Let $\fset{n}$ be the set $\{1, 2,\dots,n\}$,
and $\pow{X}$ denote the powerset of $X$.
We flatten a
collection of sets $\coll{V}$ as $\all{\coll{V}} = \cup_{V_i \in \coll{V}} V_i$.

\subsection{Preliminaries}

Recall the following standard definition:
\begin{definition}[\cite{puterman94mdp}]
A \emph{Markov Decision Process (\mdp)}

is a 5-tuple $\langle S, U, T, R, \gamma \rangle$ where
{
\begin{itemize}
\item $S = \{s_1, s_2, \dots, s_{|S|}\}$ is the finite set of states; 
\item $U = \{u_1, u_2, \dots, u_{|U|}\}$ is the finite set of controls; 
\item $T: S\times U \times S \to [0,1]$ is the transition dynamics,
describing the stochastic state transitions of the system,
assumed to be Markovian in the states, where $\forall t, \Pr(s^{t+1} = s' | s^{t}
= s, u^{t} = u) = T(s',u,s)$;
\item $R: S\times U\to \Reals$ is the reward,
specifying that value $R(s,u)$ is obtained by executing control $u$ in state~$s$;
\item $\gamma \in [0,1)$ is the discount factor.
\end{itemize}}

\end{definition}
\smallskip

One solves an \mdp by providing a \emph{policy}, \mbox{$\pi: S \to U$,} which
prescribes that control $\pi(s)$ be executed in state $s$.  

For policy $\pi$, starting from some initial state \mbox{$s^0 \in S$},
we will consider the following criterion:
\begin{definition}For \mdp $\typemdp{M}= \langle S, U, T, R, \gamma \rangle$ 
the \emph{expected cumulative reward} of policy $\pi$ is
\begin{equation}
E[\pi \vert \typemdp{M}] =  \mathop{\scalebox{1.35}{$\mathbb{E}$}}\limits_{s^0,s^1,\dots}\left(\sum_{t = 0}^{\infty} (\gamma^t) R\big(s^t, \pi(s^t)\big) \right), \label{eq:exp-reward}
\end{equation}
\noindent where the expectation is over sequences $s^0,s^1,s^2,\dots$ 
arising with probabilities $T(s^{i},\pi(s^{t-1}), s^{t-1})$, for $t = 1, 2, \dots$.
\end{definition}

We seek $\pi^\opt(\cdot)$
maximizing \eqref{eq:exp-reward}, an \emph{optimal policy}.

\subsection{Assumptions}

\newcommand{\qq}[1]{\phantom{} \textbf{#1.}~}

\noindent The model explicitly assumes three things:---\\
\qq{1}Failure can be detected.\\
\qq{2}Failure is irreparable and considered binary. \\
\qq{3}Efficiency cannot increase after failure of an actuator. \\[-8pt]

Failure is detectable in that,
after attempting to execute an action, 
there is some indicator that failure has
occurred when one has. We model complete failure of each component so it remains
inoperative thereafter.  The treatment
does directly allow, as special cases, a variety of models where discovery of
failure is natural: for instance, as when an actuator's failure is learned only
after trying it and then discovering a non-transition which only occurs upon
failure. 

\label{sec:assumptions-pathological}
We specifically rule out the rather pathological cases where after a failure has occurred, the robot has higher expected reward over future actions. This bounds the performance of a particular set of actuators. Importantly, we do not rule out the possibility that in the process of failing the robot transitions to states with higher expected reward. Still the mere reduction in the number of available actuators in a state cannot itself improve the state's expected value.

\section{Failure model}

Consider a decision process where the set of controls is not modeled merely as
a flat set, but that each control is generated by a subsystem\,---which we will
designate as an `actuator'---\,that is responsible for multiple controls.  
These are the atomic units of failure that we model.
Let the robot have $m$ actuators so that the controls are partitioned into
disjoint sets $U_1, U_2, \dots, U_m$. For example, $U_2 = \{u_5, u_{6}, u_{7},
u_{8}\}$ gives the \num{4} controls that actuator \num{2} is involved in
generating. Then we might consider \mdp $M= \langle S, \;U_1 \cup U_2 \cup
\dots \cup U_m\;, T, R, \gamma \rangle.$

Imagine that the robot attempts to execute some control $u_j$ but that as it does
so, with some probability, 
the actuator fails.
The state outcome does not follow the usual $T(\cdot, u_j, \cdot)$ rule, and the actuator
becomes inoperative so that, from this point forward, none of the controls it
generates can be executed.  Per {Assumption 1}, this failure can be
reliably determined either because the state transition is unusual, or by some
other detector (one might imagine smoke, odd sounds, etc.). 
We treat actuator failures as independent statistical events:
while greater use of the actuator is more likely to cause failure,
one actuator failing does not cause another to fail. 
But one slight subtlety: 
because an actuator fails, the controls it generates are no longer
available, this may cause other controls to be executed than would have
been, thereby increasing the chance of another actuator failing.
This correlation, through increased use, \emph{is} captured in the model.

Here is our definition:
\hspace{-5pt}\begin{definition}
A \emph{Fallible Actuator \mdp (\famdp)}
is an 8-tuple 
$\langle S, m, \coll{U}, T, F, R, \rel, \gamma\rangle$ 
where
\begin{itemize}
\item $S = \{s_1, s_2, \dots, s_{|S|}\}$ is the finite set of states; 
\item $m \in \Nats$ is the number of actuators;
\item $\coll{U} = \{U_1, U_2, \dots, U_m\}$ is a collection of disjoint control sets. 
\item $T: S\times \all{\coll{U}} \times S \to [0,1]$ is the regular (or nominal) transition
model, describing the stochastic state transitions of the system,
assumed to be Markovian in the states, where $\forall t, \Pr(s^{t+1} = s' | s^{t}
= s, u^{t} = u) = T(s',u,s)$;
\item $F: S\times \all{\coll{U}} \times S \to [0,1]$ is the malfunctioning 
transition dynamics, describing the stochastic state transitions of the system
as the control fails,
Markovian in the states, where $\forall t, \Pr(s^{t+1}\!=\!s'| s^{t}\!=\!s, u^{t}\!=\!u) =\allowbreak F(s',u,s)$ as the actuator that generates $u$ becomes inoperative;
\item $R: S\times \all{\coll{U}}\to \Reals$ is the reward function;
\item $\rel:S \times \all{\coll{U}}  \to [0,1]$ is the
reliability function, modeling
 that control $u$, when executed from state
$s$, results in a failure of actuator $k$ with 
probability \mbox{$1-\rel(s,u)$};
\item $\gamma \in [0,1]$ is the discount factor.
\end{itemize}
\end{definition}
\smallskip

Let $\relm{k} = \!\!\!\max\limits_{s\in S, u\in U_k}\!\!\rel(s,u)$
be the most reliable control  
for actuator $k\in\fset{m}$ and, similarly,
define the least reliable control,
$\relmin{k} = \!\!\!\min\limits_{s\in S, u\in U_k}\!\!\rel(s,u)$.
Looking across all the actuators,
now also define $\relM = \!\!\max\limits_{k\in\fset{m}}\!\relm{k} = \!\!\!\!\max\limits_{s \in S, u \in \all{\coll{U}}}\!\!\rel(s,u)$.
We will require either that $\gamma \in [0, 1)$ or $\relM\in [0, 1)$.

Note that we do not require $0 < \relM$ as in those cases the problem will involve a finite-horizon
(there are at most $|\all{\coll{U}}|$ steps).
In fact, $\relM=0$ is meaningful, being a model 
of single-shot actuators (i.e., is the case of disposable\slash one-time use gadgets---like thruster rockets or parachutes).

The evolution of \famdp $\typemdp{M} = \langle S, m, \coll{U}, T, F, R,
\coll{\alpha}\rangle$ is straightforward:
when, at time $t$, the robot is in some state $s^t$, and it 
attempts to execute $u^t = u_j \in U_k$, 
either 
\begin{itemize}
\item with probability $\rel(s^t, u_j)$: actuator $k$ does not fail and the system transitions to state $s^{t+1}$ via 
$T(s^{t+1}, u_j, s^{t}$);
\item alternatively with probability $1-\rel(s^t, u_j)$: actuator $k$ fails and the system obeys law $F(s^{t+1}, u_j, s^{t})$. 
\end{itemize}
We have assumed here that $u_j$ was admissible because actuator $k$ remained
operable at time $t$. 
Any attempt to execute a control from a previously failed actuator is undefined.

\bigskip
A \famdp can be cast into a traditional \mdp as follows:
\noindent\begin{definition}
\label{defn:monolithic}
\famdp $\typemdp{M} = \langle S, m, \coll{U}, T, F, R, \rel, \gamma\rangle$
has an associated \emph{\monomdp} $\typemdp{M}\mon  =
\langle S', U', T', R', \gamma' \rangle$:
\begin{itemize}
\item $S' = S \times \pow{\fset{m}}$;
\item $U' = U_1 \cup U_2 \cup \dots \cup U_m$;
\item \mbox{$T'((s',I'), u, (s,I)) =$}\\[5pt]
$\left\{\begin{array}{ll}
        \frac{1}{\beta}\rel(s,u) T(s',u,s), & {\small\text{if } u \in U_{k}, k \in I, I = I'},\\[6pt]
        \multirow{2}{*}{$\frac{1}{\beta}(1-\rel(s,u)) F(s',u,s),\!\!$} & {\small\text{if }  u \in U_{k}, k \in I\setminus I'},\\
                               & \quad {\small\text{\; and } I = I' \cup \{k\}},\\[6pt]
        0, & {\small\text{otherwise},}
\end{array}\right.$\\[4pt]
\item [] where it is to be understood that $U_k \in \coll{U}$,
 and multiplier $\phantom{.}\beta = 
\left\{\begin{array}{ll}
 \relM &  {\small\text{if } \gamma = 1},\\
 1 &  {\small\text{otherwise};}
\end{array}\right.$
\item $R'\big((s,I), u\big) = R(s, u)$;
\item $\gamma' = (\gamma \cdot \beta)$.
\end{itemize}
\end{definition}

In \monomdp form, the state space is augmented by a set
indicating which actuators are operable and
solving the problem
involves producing a policy, $u^t = \pi^\opt((s^t,I^t))$,
starting from state $(s^0, \fset{m})$.
The first case of $T'$ is wherein
no actuator failure occurs; the second when
actuator $k$ fails.

As there are many potential ways actuators may fail, the
state space is exponential in $m$.  
But we note that the state space's
structure is ripe for exploitation in
data-structures (to help reduce memory requirements) and algorithms.

\section{Exploiting structure}

Instead of $M\mon$ where the state-space structure is lost, 
to expose the underlying structure
we represent the \famdp as a
directed graph with associated state data:\\[-8pt]

\begin{definition}
For \famdp $\typemdp{M} = \langle S, m, \coll{U}, T, F, R, \rel, \gamma\rangle$,
an associated \emph{value function lattice} is a directed graph $G(\typemdp{M}) = (\typegr{X}, \typegr{E})$, with
\begin{list}{}{%
\setlength{\labelsep}{0pt}
\setlength{\labelwidth}{38pt}
\setlength{\topsep}{1pt}
\setlength{\itemsep}{0pt}
\setlength{\itemindent}{-10pt}
\setlength{\leftmargin}{14pt}
}%
\item $\bullet\,$ nodes $\typegr{X}\!=\!\left\{\node{\coll{X}_r}{V_r} \,|\, \coll{X}_r \subseteq \coll{U}, V_r\!:\!S \to \Reals\right\}$, every $\coll{X}_r$ being a subset of $\coll{U}$, and each $V_r$ a value function; and
\item $\bullet\,$ the set of arcs $\typegr{E}\!=\!\big\{(\node{\coll{X}_q}{V_q}, \node{\coll{X}_d}{V_d}) \,\big|\, U_i \not\in \coll{X}_d\text{ and }\allowbreak \coll{X}_q = \coll{X}_d \cup \{U_i\}\big\}$.
\end{list}
\end{definition}

\smallskip
\noindent The unique \emph{top} of $G(\typemdp{M})$ is the node $\node{\coll{U}}{V_{\coll{U}}}$,
and the unique \emph{bottom} is the node $\node{\emptyset}{V_{\emptyset}}$.
For this degenerate case, we define the value function as
\[V_{\emptyset}(s) = \tfrac{1}{1-\gamma}\cdot\min_{u \in \all{\coll{U}}}R(s,u).\]

\begin{figure}
    \centering
    \includegraphics[scale=0.6, trim=0pt 0pt 0pt 0pt, clip]{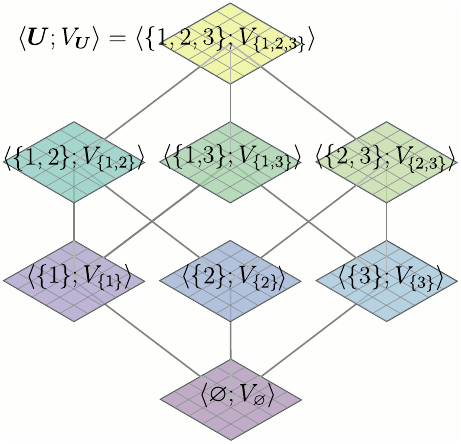}
    \caption{Value function lattice for a simple example with a set $\coll{U}$ comprising three elements. The small grids are a cartoon depiction of a $5\times 5$ state space emphasizing that each element is a value function; e.g., \mbox{$V_{\{2,3\}}:\!S \to \Reals$} assigns values to each state for the situation when $1$ has failed.\label{fig:lattice}}
    \vspace{-9pt}   
\end{figure}

\newcommand{\notationinset}{%
\begin{wrapfigure}[5]{l}{0.30\linewidth}
\vspace*{-2.7ex}
\includegraphics[scale=0.57, trim=10pt 55pt 75pt 55pt, clip]{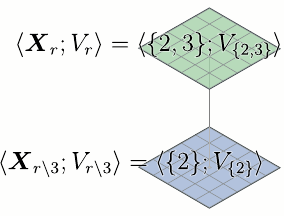}
\end{wrapfigure}%
}

For node $\node{\coll{X}_r}{V_r} \in \typegr{X}$ with $U_\ell \in \coll{X}_r$, we introduce the
notation $\coll{X}_{r\setminus U_\ell} \coloneqq \coll{X}_r \setminus \{U_\ell\}$, and immediately extend its use so that
$\node{\coll{X}_r}{V_r}$ is connected by an arc to $\node{\coll{X}_{r \setminus U_\ell}}{V_{r \setminus U_\ell}}$.
It is helpful to visualize the available actuators as being described in this directed 
graph, arranged 
 \notationinset 
in layers per set
cardinality, from $m$ at the top, to
$0$ at the bottom as depicted in \autoref{fig:lattice}. Actuator failures occur in a directed way on this graph,
moving along an arc. 
We can think of local copies of state space $S$, along with the associated 
value functions for each node. 
As the robot moves, it occupies a state in $S$ and the transition dynamics,
though moving in $S$, either stay in the node when there is no failure,
or move down to a node in the layer below.

The core idea is to perform a specialized form of value iteration on the value
functions associated at nodes because it allows us to exploit the
inter-relationships between value functions.  There are two specific forms.
Failure is monotonic: if actuator $k$ fails, the controls in $U_k$ become
unavailable ever after---the propagation of
updates during value iteration respects a partial order on all the states
induced by this monotonicity.  Secondly, a value function at one layer yields
a lower-bound for value functions above.

To apply value iteration at some particular node in $G(\typemdp{M})$, we
introduce a local operator. For node $\node{\coll{X}_r}{V_r}$,
a single Bellman update at state $s \in S$ is as follows:
\begin{align*}
V_r^\idx{i+1}(s)\!  =\!\!\! &\max_{\substack{u \in U_k\\U_k \in \coll{X}_r}} \bigg[ \gamma\!\sum_{s'\in S}\!\Big( \rel(s,u) T(s',u,s)V_r^\idx{i}(s')\;+\\[-8pt] &\;\;(1-\rel(s,u))F(s',u,s)V^\idx{i}_{r\setminus U_k}(s')\Big) + R(s,u)  \bigg].
\end{align*}

Let $\babybell$ denote the operator defined so that, given $\node{\coll{X}_r}{V_r}$, the update is
performed across the vector of states:
\begin{equation*}
V_r^\idx{i+1} = \babybell V_r^\idx{i}.
\end{equation*}
Here $\babybell$ is a \emph{local} Bellman operator in the sense that only
$V_r$ and the value functions immediately reached by
arcs departing $\node{\coll{X}_r}{V_r}$ are involved.

First, we establish that this operator converges to a fixed value function by
showing that it is a contraction.  Given two different value functions at some
node, applying $\babybell$ brings them
closer together, quantified through the \emph{max norm}:
\begin{equation}
\maxnorm{V} = \max_{s \in S} \abs{V(s)}.
\end{equation}

\begin{property}
\label{prop:contraction}
Local Bellman operator $\babybell$ is a contraction.
\end{property}
\begin{proof}
For actuator set $\coll{X}_r$, consider the two value functions
$V_r^\idx{i}$ and ${V'}_r^\idx{i}$:\\[-22pt]

{\footnotesize\par
\begin{align*}
\big\|\babybell V_r^\idx{i}  & - \babybell {V'}_r^\idx{i}\big\| \\
& \leq 
            \max_{s \in S} 
\max_{\substack{u \in U_k\\U_k \in \coll{X}_r}}\bigg|\gamma\rel(s,u)\sum_{s'\in S}T(s',u,s)\Big(V_r^\idx{i}(s') -  {V'}_r^\idx{i}(s')\Big)\bigg|\\
& \leq \gamma \max_{\substack{u \in U_k\\U_k \in \coll{X}_r}} \relm{k} \cdot
\max_{s \in S}\sum_{s'\in S}T(s',u,s)\Big|V_r^\idx{i}(s') -  {V'}_r^\idx{i}(s')\Big|\\
& \leq \gamma \max_{\substack{u \in U_k\\U_k \in \coll{X}_r}} \relm{k} \cdot
\max_{s' \in S}\Big|V_r^\idx{i}(s') -  {V'}_r^\idx{i}(s')\Big|\\
& = \gamma\cdot\relM\cdot \maxnorm{V_r^\idx{i} -  {V'}_r^\idx{i}}.
\end{align*}
}\vspace*{-12pt}

Finally, the contraction follows as factor $\gamma\cdot\relM < 1$.
\end{proof}
\smallskip

We shall use of the following (based upon \cite[p.~150]{puterman94mdp}):
\begin{lemma}
\label{lem:puter}
{\small\par $\smallmaxnorm{V^\opt - V^\idx{n}} \leq  \lambda  \smallmaxnorm{V^\idx{n} - V^\idx{n-1}}$\\
\phantom{.}\hfill where $\lambda = \dfrac{\lb}{1-\lb}$.\quad\phantom{.}}
\end{lemma}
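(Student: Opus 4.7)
The plan is to treat this as an instance of the standard a posteriori error estimate for fixed-point iteration of a contraction, specialized to $\babybell$ with contraction factor $c = \gamma\cdot\relM$. Property~\ref{prop:contraction} together with $c<1$ (guaranteed by the condition $\gamma\in[0,1)$ or $\relM\in[0,1)$ stated after the \famdp definition, combined with $\gamma,\relM\le 1$) lets us invoke the Banach fixed-point theorem: $\babybell$ admits a unique fixed point $V^\opt$, and the iterates $V^\idx{n}=\babybell V^\idx{n-1}$ converge to it in max norm. I would state this explicitly as the first step, because everything downstream needs the existence of $V^\opt$ and $V^\idx{n}\!\to\!V^\opt$.

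Next I would derive the one-step decay of successive differences by applying Property~\ref{prop:contraction} once:
\begin{equation*}
\smallmaxnorm{V^\idx{n+1}-V^\idx{n}}=\smallmaxnorm{\babybell V^\idx{n}-\babybell V^\idx{n-1}}\le \lb\cdot\smallmaxnorm{V^\idx{n}-V^\idx{n-1}},
\end{equation*}
and iterate it to obtain $\smallmaxnorm{V^\idx{n+k}-V^\idx{n+k-1}}\le \lb^{k}\smallmaxnorm{V^\idx{n}-V^\idx{n-1}}$ for all $k\ge 1$.

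Then I would chain triangle inequalities across a long tail of iterates and use the geometric sum: for every $N\ge 1$,
\begin{equation*}
\smallmaxnorm{V^\idx{n+N}-V^\idx{n}}\le\sum_{k=1}^{N}\smallmaxnorm{V^\idx{n+k}-V^\idx{n+k-1}}\le\sum_{k=1}^{N}\lb^{k}\smallmaxnorm{V^\idx{n}-V^\idx{n-1}},
\end{equation*}
which is bounded above by $\dfrac{\lb}{1-\lb}\smallmaxnorm{V^\idx{n}-V^\idx{n-1}}=\lambda\smallmaxnorm{V^\idx{n}-V^\idx{n-1}}$. Finally, since $V^\idx{n+N}\to V^\opt$ in max norm, letting $N\to\infty$ yields $\smallmaxnorm{V^\opt-V^\idx{n}}\le\lambda\smallmaxnorm{V^\idx{n}-V^\idx{n-1}}$, which is the lemma.

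There is no real obstacle here; the only point requiring any care is that the passage to the limit uses continuity of the max norm, which is immediate on a finite state space, together with convergence of the iterates, which in turn needs strict contraction, i.e.\ $\lb<1$. That this strict inequality holds under the standing assumption on $\gamma$ and $\relM$ is the one hypothesis I would verify explicitly at the outset so that both Banach's theorem and the geometric series $\sum_{k\ge 1}\lb^{k}=\lb/(1-\lb)$ are legitimate.
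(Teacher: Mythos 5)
Your proof is correct and follows essentially the same route as the paper's: telescope the tail differences with the triangle inequality, bound each by the contraction factor $\lb$ from Property~\ref{prop:contraction}, sum the geometric series, and pass to the limit using convergence of the iterates. The only cosmetic difference is bookkeeping --- you anchor the geometric sum on $\smallmaxnorm{V^\idx{n}-V^\idx{n-1}}$ from the start, whereas the paper sums against $\smallmaxnorm{V^\idx{n+1}-V^\idx{n}}$ and then ``unrolls once'' at the end to produce the same factor $\lambda = \lb/(1-\lb)$.
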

\begin{proof}
~\\[-20pt]
{\footnotesize\par
\begin{align*}
\smallmaxnorm{V^\idx{m+n} - V^\idx{n}} 
&\leq \sum_{k=0}^{m-1}{\smallmaxnorm{V^\idx{n+k+1} - V^\idx{n+k}}}\\
&\leq \sum_{k=0}^{m-1}{\smallmaxnorm{\babybell^{k} V^\idx{n+1} - \babybell^{k} V^\idx{n}}}\\
&\leq \sum_{k=0}^{m-1}{{(\gamma\cdot\relM)^{k}}\smallmaxnorm{V^\idx{n+1} - V^\idx{n}}}\\
&\leq {\frac{1-\lb^m}{1-\lb}}\smallmaxnorm{V^\idx{n+1} - V^\idx{n}}.
\end{align*}
}

Hence, the sequence $V^\idx{n}, V^\idx{n+1}, V^\idx{n+2}, V^\idx{n+3}, \cdots$ is Cauchy, and we can take the limit
as $m \to \infty$. 
We obtain the final result unrolling once using Property~\ref{prop:contraction}.
\end{proof}

\bigskip

Next, we examine the relationship between approximation errors 
within the value function lattice.
Consider
an arbitrary node $\node{\coll{X}_r}{V_r}$, where
$\coll{X}_r = \{U_1, U_2, U_3, \dots, U_{N_r}\}$. 
It accordingly has arcs connecting to
nodes $\node{\coll{X}_{r \setminus U_1}}{V_{r \setminus U_1}},$
$\node{\coll{X}_{r \setminus U_2}}{V_{r \setminus U_2}},$ $\dots,$
$\node{\coll{X}_{r \setminus U_{N_r}}}{V_{r \setminus U_{N_r}}}$.  
Denote the global
fixed point value functions as $V^\opt_r$ for the node itself, and
$V^\opt_{r\setminus U_j}$ with $j\in\fset{N_r}$ for those the layer below.
(One can imagine this fixed point either obtained from the monolithic 
problem as in \autoref{defn:monolithic}, or 
\gobble{instead }
as the fixed point
under local Bellman operator, computed upward from the lattice bottom.)

We analyze an update to $V_r^\idx{i}$, assuming that, for each $j \in \fset{N_r}$,
we have value functions $V_{r \setminus U_j}$ with error at most~$\varepsilon_j$.
Then, the error in some Bellman iterate is bounded:

{\small
\begin{align*}
\big\|&\babybell V_r^\idx{i} - {V}_r^\opt\big\| =\big\|V_r^\idx{i+1} - V_r^\opt\big\| \\
& =            \max_{s \in S} \abs{ V_r^\idx{i+1}(s,u) - V_r^\opt(s,u) }\\
& =
            \gamma \max_{s \in S} 
\max_{\substack{u \in U_k\\U_k \in \coll{X}_r}}\!\bigg|\sum_{s'\in S}\rel(s,u) T(s',u,s)\Big(V_r^\idx{i}(s') -  V_r^\opt(s')\Big) +  \\[-4pt]
&\quad\qquad
\sum_{s'\in S}(1-\rel(s,u)) F(s',u,s)\Big(V_{r\setminus U_k}^\idx{i}(s') -  V_{r\setminus U_k}^\opt(s')\Big) \bigg|\\
& \leq 
            \gamma \max_{\substack{u \in U_k\\U_k \in \coll{X}_r}}\!\Bigg(
\max_{s \in S} \bigg|\sum_{s'\in S}\rel(s,u) T(s',u,s)\Big(V_r^\idx{i}(s') -  V_r^\opt(s')\Big)\bigg| +  \\[-4pt]
&\qquad
\max_{s \in S} \bigg|\sum_{s'\in S}(1-\rel(s,u)) F(s',u,s)\Big(V_{r\setminus U_k}^\idx{i}(s') -  V_{r\setminus U_k}^\opt(s')\Big) \bigg| \Bigg)\\
& \leq
            \gamma \max_{\substack{u \in U_k\\U_k \in \coll{X}_r}}\!\bigg(
\relm{k} \cdot \max_{s' \in S} \Big|\big(V_r^\idx{i}(s') -  V_r^\opt(s')\big)\Big| +  \\[-4pt]
&\quad\quad\qquad\qquad
(1-\relmin{k}) \cdot \max_{s' \in S} \Big|\big(V_{r\setminus U_k}^\idx{i}(s') -  V_{r\setminus U_k}^\opt(s')\big)\Big| \bigg)  \\[-4pt]
& = 
            \gamma \max_{\substack{u \in U_k\\U_k \in \coll{X}_r}}\!\bigg(
\relm{k} \cdot \maxnorm{V_r^\idx{i} -  V_r^\opt} + 
(1-\relmin{k}) \cdot \maxnorm{V_{r\setminus U_k}^\idx{i} -  V_{r \setminus U_k}^\opt}\bigg)\\
& \leq \gamma \max_{k \in \fset{{N_r}}}\!\bigg(
\relm{k} \cdot \maxnorm{V_r^\idx{i} -  V_r^\opt} + 
(1-\relmin{k}) \cdot \varepsilon_k\bigg).
        \end{align*}}
To bound $\maxnorm{\babybell V_r^\idx{i} - V_r^\opt}$ by $\varepsilon_r$, it is enough
to have 

{\small
\begin{align*}
\max_{k \in \fset{N_r}}\!\bigg( \relm{k} \cdot \maxnorm{V_r^\idx{i} -  V_r^\opt} + (1-\relmin{k}) \cdot \varepsilon_k\bigg) & \leq \frac{\varepsilon_r}{\gamma},
\end{align*}}
but we can rewrite it, using Lemma~\ref{lem:puter}, 

{\small
\begin{align*}
\max_{k \in \fset{N_r}}\!\bigg( \relm{k} \cdot \maxnorm{V_r^\idx{i} -  V_r^\opt} + (1-\relmin{k}) \cdot \varepsilon_k\bigg) &\leq \\
\max_{k \in \fset{N_r}}\!\bigg( \relm{k} \cdot \lambda \maxnorm{V_r^\idx{i} - V_r^\idx{i-1}} + (1-\relmin{k}) \cdot \varepsilon_k\bigg)  &\leq \frac{\varepsilon_r}{\gamma},
\end{align*}}
and, as a bound on the maximum, we can simply require:

{\small
\begin{align*}
\maxnorm{V_r^\idx{i} - V_r^\idx{i-1}}   \leq \min_{k\in\fset{N_r}}\Bigg[\frac{1}{\lambda}\bigg(\frac{\varepsilon_r}{\gamma\relm{k}} - \frac{(1-\relmin{k})}{\relm{k}} \cdot \varepsilon_k\bigg)\Bigg].
\end{align*}}
Further, when each node $\node{\coll{X}_r}{V_r} \in \typegr{X}$ is subject to the same quality constraint $\varepsilon_r$ we have:

{\small
\begin{align*}
\maxnorm{V_r^\idx{i} - V_r^\idx{i-1}}   \leq \min_{k\in\fset{N_r}}\Bigg[\frac{\varepsilon_r}{\lambda\relm{k}}\bigg(\frac{1}{\gamma} + \relmin{k} - 1\bigg)\Bigg].
\end{align*}}

\section{Solving \famdp Problems}
The simplest approach to solving \famdp problems is to construct a monolithic \famdp which can be treated by a general \mdp solver. 
We define a planner utilizing this approach as a \monoplanner.
By introducing a value function lattice we can reduce the time required to converge to an optimal policy.
This requires a specialized solver which can operate on the value function lattice directly, essentially treating the \famdp as a lattice of smaller \mdps. 
A planner which utilizes the value function lattice is called a \latticeplanner.

Since the value function lattice is a directed acyclic graph (DAG), we can use a method similar to Topological Value Iteration~\cite{dai2011topovi} to solve each node to a given quality requirement $\varepsilon_{\rm{desired}}$. This is done from the bottom upward, where each node is solved to the quality constraint once those below it have already been solved. The binary failure property of actuators guarantees this approach since every node is causally dependent on only the nodes below it. 
To solve an \famdp subject to the quality requirement $\varepsilon_{\rm{desired}}$ using a \latticeplanner we first compute the maximum Bellman error $y_r$ required to satisfy $\varepsilon_{\rm{desired}}$ for each node $\node{\coll{X}_{r}}{V_{r}}$. 
If each node is held to the same quality constraint we have

{\small
 \vspace*{-2ex}
\begin{align*}
    y_r = \min_{j\in\fset{N_r}}\Bigg[\frac{\varepsilon_{\rm{desired}}}{\lambda\relm{j}}\bigg(\frac{1}{\gamma} + \relmin{j} - 1\bigg)\Bigg].
\end{align*}}
\noindent We then perform value iteration on each node in the lattice in a bottom-up fashion. 
Note that in contrast to Topological Value Iteration~\cite{dai2011topovi}, we have the useful properties that the meta-state orderings enable, but without having to pay the cost of actually computing strongly connected components.

For an additional speedup we can leverage the similarity between actuator combinations to propagate values up the lattice. The monotonicity of failure ensures that value functions at every node in the lattice serves as a lower bound for the nodes above them.
As a result, we can \emph{hot-start} the value function at each node by taking the maximum value 
at each state in the nodes below.
A \latticeplanner which employs \hotstarting is called a \latticeplannerhotstart.

\section{Experiments}
To measure the performance of each planner, we track the planning time required via the total number of reads and writes to the value functions. 
This metric is a meaningful approximation of a planner's running time because it is implementation agnostic and removes noisy runtime clock measurements.
Our implementation (in Python) limits the scale of problems we can evaluate in a practice---future work might optimize the implementation to investigate \famdps with larger state spaces and more actuators.
\begin{figure}
    \centering
    \includegraphics[width=\linewidth, trim=5pt 5pt 5pt 5pt, clip]{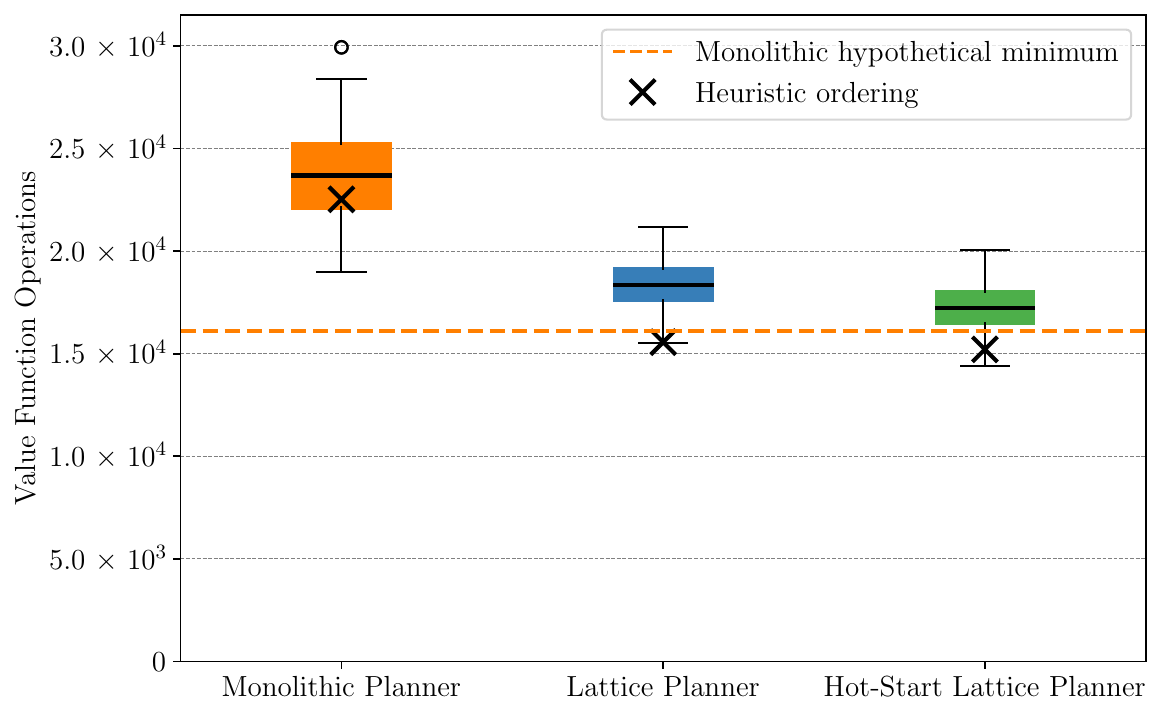}
    \caption{Value Function Operations \emph{vs} State Backup Ordering: ($\gamma=0.99$, $\varepsilon_{\tgt}=0.001$) Comparison of state backup orderings during asynchronous value iteration. 
    Ordering distributions are represented by the box and whisker plots.}  
    \label{fig:mono_famdp_comp}
\end{figure}
\subsection{Methodology}
To explore the properties of \famdp problems we will use a 2D \gridworld which provides a simple yet scalable environment to test on.
Each cell in the grid corresponds to a state and is associated with a \textit{terrain} property.
It is convenient to define the transition dynamics $T$ and $F$, the reward $R$, and the reliability $\rel$ functions with respect to this property.
The terrain dictates the controls available in a given state, with only a subset of the actuators working in each terrain. 
If an actuator is available in a state, it has a set of controls for moving to directly adjacent cells.
The transition dynamics are dictated by a single \emph{precision} value $p \in [0,1]$, which is the probability that a given control will transition to the desired next state, assuming no failure occurs. 
For example if the agent is in state $s \in S$, control $u \in U_k$ is \emph{}{up} and the state above $s$ is $s' \in S$, then $T(s',u,s) = a$. 
If state $s$ has $n$ states adjacent to it, then for every remaining state $s_{i}, i \in \fset{n-1}$, 

{\small
\vspace*{-4ex}
\begin{align*}
     T(s_{i},u,s) = \frac{(1-a)}{(n-1)}.
\end{align*}
\vspace*{-1ex}
}

The transition dynamics for when an actuator fails $F$ are defined similarly, but usually with a reduced precision value.

The problem instance includes a single goal state $s_{\goal}$ where $T(s_{\goal},u,s_{\goal}) = 1, \rel(s_{\goal},u) = 1$, and $R(s_{\goal}, u)$ is a positive reward $R_{\goal}$ for all controls $u \in \all{\coll{U}}$.
The value functions are initialized such that 
\begin{align*}
    V^{(0)}(s) = \min_{s\in S, u\in \coll{U}}\frac{R(s,u)}{1 - \gamma}\text{\;  for all }s \in S\setminus\{s_{\goal}\}, 
\end{align*}
analogous to being stranded for eternity. 
At $s_{\goal}$ we have
\begin{align*}
    V^{(0)}(s_{\goal}) = \frac{R_{\goal}}{1 - \gamma},
\end{align*}
analogous to using any control in the goal state continuously.

\subsection{Asynchronous Value Iteration State Backup Ordering}

For asynchronous value iteration, the sequence in which states undergo value function backup plays a pivotal role in determining the computation required for convergence. 
This study investigates the ramifications of state ordering on algorithmic performance, so that we might distinguish between different performance factors. 
We also lay the groundwork for an evaluation of ordering heuristics.

\autoref{fig:mono_famdp_comp} illustrates the impact ordering plays on performance across planners. 
For each of the planners, we randomly sample 500 unique backup orderings from a uniform distribution and perform value iteration. 
Note that the definition of a random ordering is different for a \monoplanner and a \latticeplanner.
Since the monolithic \famdp does not preserve the partial order imposed by the actuators over the full state space, a random ordering for a \monoplanner is a random order across all $|S|2^{m}$ states.
For a \latticeplanner, a random ordering is defined as a random order over the local copy of $|S|$ states within each node. 
As a result, the \monoplanner has the largest performance range. 

To better analyze the backup order implications on the \monoplanner, we propose a hypothetical minimum number of operations it requires. 
We define this minimum by utilizing an \emph{oracle} to provide the converged value function $V^*$.
The hypothetical best order (via an oracle) is then defined as a partial order over the states subject to the values provided by the oracle, with the highest valued states being visited first.
We then can define the hypothetical minimum number of operations as the number of operations required for a \monoplanner to converge to $\varepsilon_{\tgt}$ using the hypothetical best order.
In practice, we obtain the oracle ordering by running value iteration until convergence to some $\varepsilon_{\tgt}$ and use that value function as a reasonable approximation of $V^*$. 
We then create our hypothetical best order informed by this value function and evaluate the total number of operations required for the \monoplanner to converge with this ordering.
The orange line in \autoref{fig:mono_famdp_comp} represents the hypothetical minimum operations. 
Note that the \latticeplanner with the Manhattan distance heuristic ordering outperforms the hypothetical best-case ordering for the \monoplanner. 
This indicates that the \latticeplanner{s} benefit not only from the partial ordering imposed by the lattice structure, but also from the decomposition into smaller \mdps.
Despite the small size of the state space and the limit actuator set, the \latticeplanner{s} unquestionably outperform the \monoplanner. 

To effectively scale these planners for more complex \mdps, it's crucial to adopt an ordering heuristic tailored to the  problem domain. 
Our evaluation of the Manhattan distance heuristic demonstrates its
superiority over most random orderings on each of the planners, making it a
good choice as 
the default ordering heuristic for subsequent tests.

\begin{figure}
    \centering
    \includegraphics[width=\linewidth, trim=10pt 10pt 10pt 10pt, clip]{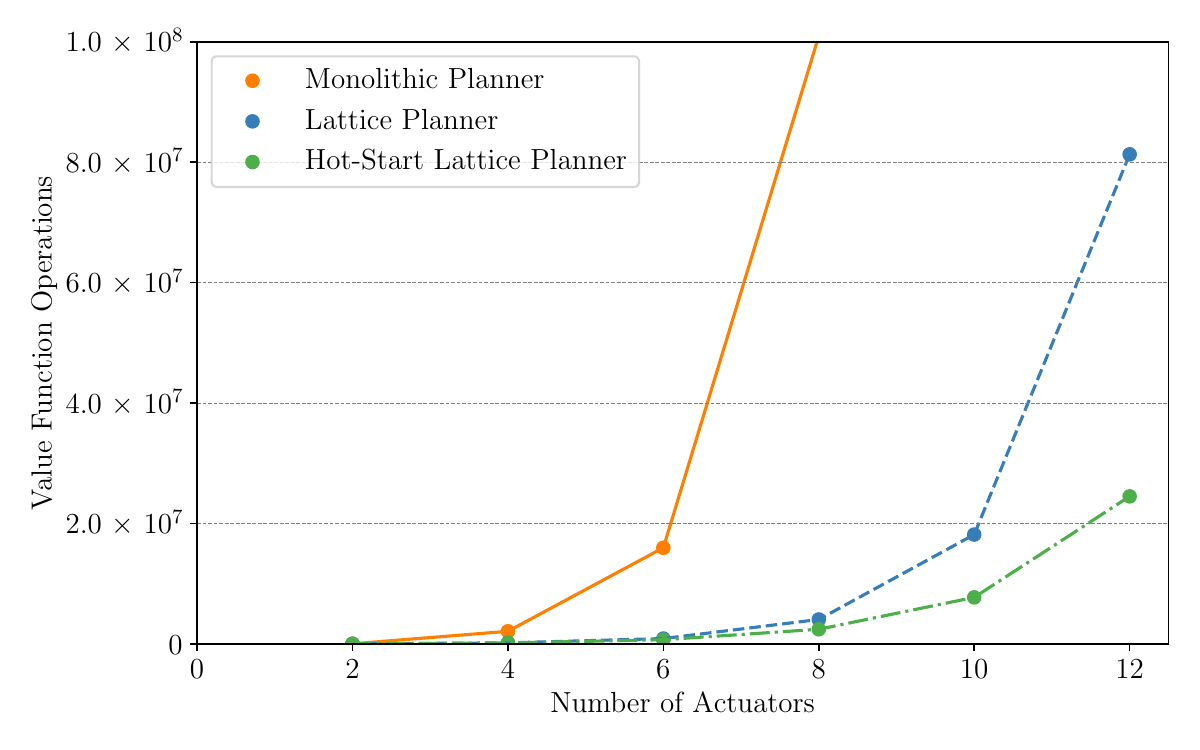}
    \caption{Value Function Operations \emph{vs} Number of Actuators: Scaling the number of actuators from 2 to 12 using the \gridworld from \autoref{fig:famdpPlannerTop}.}
    \label{fig:actuator-scaling}
\end{figure}

\subsection{Scaling}
To characterize the performance of each planner as the size of the lattice grows, we increase the number of actuators $m$ in the baseline $6\times6$ \gridworld from 2 up to 12. 
For consistency, we create an additional copy of each actuator for each increasing scale. 
Thus an \famdp with 12 actuators contains 6 copies of the 2 original actuators.
    
\autoref{fig:actuator-scaling} demonstrates why the exponential nature of \famdp problems necessitates planner efficiency. 
The \monoplanner rapidly becomes infeasible beyond 6 actuators, even with a small grid containing 36 states. 
While all planners have the same asymptotic complexity, the \latticeplanner{s} are able to scale to significantly more actuators before becoming infeasible. 
Here the value of \hotstarting is made apparent as the number of actuators increases.
Despite the small grid, \hotstarting enables planning on more complicated robots as the \latticeplannerhotstart with 12 actuators performs a similar number of operations as the \latticeplanner with only 10.

\section{Conclusion}
When suitably interpreted, discount factors within \mdps express an all-or-nothing model of breakdown.  
The present paper refines that model to enable treatment of incremental, component-wise failure for the case of actuators that cease proper functioning with a likelihood proportional to their use. 
This allows one to model a robot which degrades across time and to plan operations strategically by choosing actions that anticipate a future in which the hardware is less capable.  
In essence, our approach augments the notion of state with characteristics of robot's operational hardware.
While this means that standard \mdp solvers can treat such problems, being general solvers, they are necessarily oblivious to the specific structure which arises when characterizing malfunction.
We demonstrate that the lattice structure produces a more efficient solution through 
3 separate means: 
(1) the partial order induced by the subset structure arising from set inclusion of actuators; (2) decomposition into a set of smaller \mdps; (3) the similarity of value functions 
for sets of actuators that have minor differences.
Our empirical results show that the advantage afforded by these 
three 
ideas increases with the addition of actuators, i.e., as robots become more complex.

\bibliographystyle{IEEEtranS}
\bibliography{references}
\end{document}